\documentclass[letterpaper, 10 pt, conference]{ieeeconf}\IEEEoverridecommandlockouts                     

\usepackage{times}
\usepackage{amsmath}
\usepackage{amssymb}
\usepackage{mathtools}


\usepackage{amsthm}
\usepackage{algorithm}
\usepackage{algpseudocode}
\usepackage{booktabs}
\usepackage{cite} %
\usepackage{xcolor}
\usepackage{tikz}
\definecolor{figprimary}{HTML}{046B99}
\definecolor{figsecondary}{HTML}{C62D1F}
\usepackage{silence}
\usepackage[skip=0pt,font=footnotesize]{caption} %

\newtheorem{theorem}{Theorem}

\newtheorem{corollary}[theorem]{Corollary}
\newtheorem{assumption}[theorem]{Assumption}
\theoremstyle{definition}

\theoremstyle{remark}
\newtheorem{remark}[theorem]{Remark}

\makeatletter\def\@IEEEauthorblockconfadjspace{-1.75em}\makeatother

\title{\LARGE \bf
Full-Order Safety for Nonlinear Reduced-Order MPC
}

\author{Luis Pabon$^{1,\dagger}$, Johannes K\"{o}hler$^{2}$, Revant Kasichainula$^{1}$, and Marco Pavone$^{1}$%
\thanks{*This work was supported by the Office of Naval Research under Grant N00014-23-S-B001.}%
\thanks{$^{1}$Department of Aeronautics and Astronautics, Stanford University, USA}%
\thanks{$^{2}$Department of Mechanical Engineering, Imperial College London, UK}%
\thanks{$^{\dagger}$Corresponding author: {\tt\small lpabon@stanford.edu}}%
}

\begin{document}

\makeatletter
\g@addto@macro\@maketitle{
  \captionsetup{type=figure}\setcounter{figure}{0}
  \centering
  \includegraphics[width=0.9\textwidth]{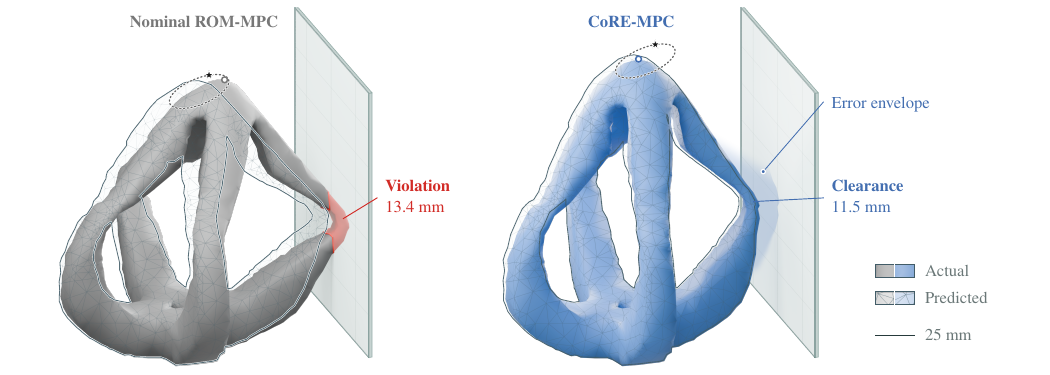}
    \captionof{figure}{\textbf{Full-order constraint satisfaction with a reduced-order model.} The solid surfaces show a $9{,}768$-state finite-element simulation of a soft robot, and the wire shells show a two-state ROM prediction of it at the same instant ($t=1.64\,\mathrm{s}$). With the nominal model, ROM-MPC (gray) predicts $0.1\,\mathrm{mm}$ clearance but actually penetrates the wall by $13.4\,\mathrm{mm}$ (red) as it fails to account for the prediction error of the ROM. By separately bounding the error along the approximation manifold and off it, including their coupling, CoRE-MPC (blue) keeps its predicted error envelope (shaded), and hence every node, clear of the wall with the same ROM, horizon, cost function, and solver.
    The dashed ellipse represents the common XY reference trajectory at the tip height and the star marks the current target along the trajectory.
    \label{fig:hero}}
  \vspace{-3mm}
}
\makeatother
\maketitle
\thispagestyle{empty}
\pagestyle{empty}

\begin{abstract}
Reduced-order models (ROMs) make model predictive control (MPC) of high-dimensional nonlinear systems tractable, but the resulting modeling error can lead to constraint violations on the real system. We introduce CoRE-MPC, a robust MPC formulation that plans with a nonlinear ROM and guarantees recursive feasibility and constraint satisfaction for the full-order system. It can be applied with a broad class of nonlinear model reduction techniques and the resulting MPC does not require evaluating the full-order model online, thus ensuring computational efficiency. 
The key insight is in how the ROM affects the dynamics of the prediction error. We account for this with two error radii that are propagated alongside the ROM and provide a guaranteed bound on the prediction error.
The approach is validated in simulation on an $80$-state oscillator chain, a Burgers equation, and a $9{,}768$-state finite-element soft robot, where a two-state ROM with data-driven bounds keeps all $1{,}628$ nodes clear of a wall at a median solve time below $1\,\mathrm{ms}$ for a $20\,\mathrm{ms}$ control period. Across these examples, CoRE-MPC is $13$ to $61$ times faster than full-order robust MPC at only $27$ to $35\%$ higher closed-loop cost, whereas existing certified schemes are either infeasible or at least twice as conservative.\\
Code: [released upon publication].
\end{abstract}

\section{Introduction}

Reduced-order models (ROMs) enable computationally tractable predictive control for high-dimensional nonlinear systems such as fluid flows, flexible structures, soft robots, and humanoids~\cite{benner2015survey,goury2018mor,dellasantina2023survey,ghansah2025hierarchical}. Safety, however, is a property of the real system, whose trajectory the ROM only approximates, and the two can differ significantly. For example, a ROM can predict a collision-free trajectory while the physical robot still hits an obstacle (Fig.~\ref{fig:hero}). In practice, this mismatch is absorbed by hand-tuned constraint margins or by a larger ROM, which trade performance and computation for safety without guaranteeing it.

Robust MPC accounts for model error by bounding it as a function of the measured state and input and propagating the bound as a tube along the plan~\cite{kohler2021robust}. The prediction error of a ROM is different: it depends on the full state, of which only the reduced coordinates are measured, so its unmeasured part evolves with its own dynamics and feeds back into the reduced prediction.

For linear FOMs, the error dynamics are linear and reduced-order MPC with full-order guarantees is well developed, with uniform~\cite{sopasakis2013rompc,lorenzetti2022linear,tonkens2021soft,wang2022tube} or input-dependent~\cite{lohning2014rompc,kohler2026filtered} error bounds. Treating the nonlinearities as additional bounded disturbances extends these approaches to nonlinear dynamics, which is, however, prohibitively conservative (cf. Section~\ref{sec:chain}). Nonlinear ROMs, such as proper orthogonal decomposition (POD)--Galerkin projections~\cite{holmes1996turbulence,goury2018mor}, quadratic manifolds~\cite{barnett2022quadratic}, and spectral submanifolds~\cite{haller2016ssm},
can provide a more accurate ROM for nonlinear dynamics.
However, using these ROMs directly in an MPC formulation can only guarantee constraints on the ROM prediction, not the real system. Applying a standard nonlinear robust MPC~\cite{kohler2021robust} to ROMs treats the unmodelled dynamics as an independent model mismatch, resulting in overly conservative or even infeasible operation. To the best of our knowledge, the robust nonlinear reduced-order MPC (RN-ROMPC) in~\cite{alora2023rompc} is the only scheme that certifies full-order constraints using only ROM prediction. However, its application requires an invariant manifold, which is available only for specific systems near an equilibrium. Thus, there exists no MPC scheme that can ensure constraint satisfaction using the nonlinear ROMs that are common in high-dimensional soft robots.

\emph{Contribution:} We present CoRE-MPC, a reduced-order MPC that enforces constraints on the full-order system for a general class of nonlinear ROMs. The prediction error within the reduced coordinates is measured, and can thus be reset at every update, but the error outside them is not. CoRE-MPC therefore bounds the two separately, as a tangential and a transverse error, together with their coupling.
\begin{itemize}
\item \textbf{Error bound}: A coupled reduction-error certificate (CoRE, Thm.~\ref{thm:core_certificate}) bounds the prediction error for general ROMs using a two-dimensional comparison system.
\item \textbf{Theoretical guarantees}: Online, CoRE-MPC propagates two error radii with the ROM, at a computational cost comparable to nominal ROM-MPC with two extra states, and guarantees recursive feasibility and constraint satisfaction for the original full-order system (Thm.~\ref{thm:core_mpc}).
\item \textbf{Numerical experiments}: We demonstrate CoRE-MPC on an oscillator chain, a Burgers equation, and the $9{,}768$-state finite-element soft robot of Fig.~\ref{fig:hero}, where nominal ROM-MPC violates the constraints. On the oscillator chain, all assumptions of Theorem~\ref{thm:core_mpc} are satisfied. Compared to full-order robust MPC, CoRE-MPC is $13$ to $61$ times faster at $27$ to $35\%$ higher closed-loop cost. Even when invariant-manifold assumption of RN-ROMPC~\cite{alora2023rompc} holds, CoRE-MPC incurs less than a quarter of its excess closed-loop cost. On the soft robot, where full-order robust MPC is computationally prohibitive, CoRE-MPC keeps all $1{,}628$ nodes clear of the wall at a median solve time below $1$~ms.
\end{itemize}

\textit{Outline:} Section~\ref{sec:problem} states the problem, Section~\ref{sec:core_certificate} derives the error certificate, Section~\ref{sec:core_mpc} presents CoRE-MPC and its closed-loop guarantees, and Section~\ref{sec:experiments} reports the experiments.

\textit{Notation:} $\|\cdot\|$ denotes the Euclidean norm and its induced matrix norm, and $\langle\cdot,\cdot\rangle$ the Euclidean inner product. Inequalities between vectors or matrices hold componentwise. A map is $C^k$ if it is $k$ times continuously differentiable. $D$ denotes the Jacobian and $D_x$ the Jacobian with respect to $x$. $D^+$ denotes the upper right Dini derivative, $D^+v(t):=\limsup_{h\downarrow0}\,[v(t{+}h)-v(t)]/h$, which reduces to $\dot v$ wherever $v$ is differentiable.

\section{Problem Formulation}
\label{sec:problem}

Consider the full-order model (FOM)
\begin{equation}
    \dot x = F(x,u)+d,
    \label{eq:fom}
\end{equation}
with state $x\in\mathbb R^N$, input $u\in\mathbb R^m$, $C^1$ dynamics $F$, and an unknown piecewise continuous disturbance $d$ with $\|d(t)\|\leq\bar d$. We impose the input and state constraints
\begin{equation}
    u\in\mathcal U,\qquad h(x)\leq 0,
    \label{eq:constraints}
\end{equation}
at all times, with a compact set $\mathcal U\subset\mathbb R^m$ and a $C^1$ constraint function $h=(h_1,\ldots,h_{n_h})$. The input constraints encode, for instance, actuator limits, and the state constraints collision avoidance and workspace bounds over the entire body of a soft robot (Fig.~\ref{fig:hero}). Real-time control relies on the reduced-order model (ROM)
\begin{equation}
    \dot z=R(z,u), \qquad \widehat x=\Phi(z),
    \label{eq:rom}
\end{equation}
with reduced state $z\in\mathbb R^r$, $r\ll N$, locally Lipschitz dynamics $R$, and a $C^2$ lifting $\Phi:\mathbb R^r\to\mathbb R^N$ that reconstructs the full state from $z$. Its image $\Phi(\mathbb R^r)$, the states the ROM can represent, is the \emph{approximation manifold}. A matrix $P\in\mathbb R^{r\times N}$ extracts the reduced coordinates $Px$ of the full state and is a left inverse of the lifting,
\begin{equation}
    P\Phi(z)=z.
    \label{eq:left_inverse}
\end{equation}
This setup includes linear projections, $\Phi(z)=Vz$ and $P=W^\top$ with $W^\top V=I$, such as POD--Galerkin~\cite{holmes1996turbulence}, Petrov--Galerkin~\cite{benner2015survey}, and modal truncation, as well as nonlinear liftings such as quadratic manifolds~\cite{barnett2022quadratic} and spectral submanifolds~\cite{haller2016ssm}, with $R$ projected from $F$ or learned from data. Beyond the left inverse~\eqref{eq:left_inverse}, the results below require neither invariance of the approximation manifold nor stability of the ROM, although both influence the growth of the error bounds (Remark~\ref{rem:M}).

Online, the controller measures only the reduced coordinates $Px(t_k)$ at the update times $t_k=k\delta$, $\delta>0$, at which the MPC re-plans. We further assume a known bound $\sigma_0\geq\|x(0)-\Phi(Px(0))\|$ on the initial distance to the approximation manifold ($\sigma_0=0$ if the system starts on it).

The problem is to design a controller that (i) plans with the ROM~\eqref{eq:rom} without evaluating the FOM online, (ii) guarantees the constraints~\eqref{eq:constraints} for the closed-loop FOM under every admissible disturbance, and (iii) minimizes a task cost.

\section{Coupled Reduction-Error Certificates}
\label{sec:core_certificate}

Guaranteeing~\eqref{eq:constraints} for the FOM requires bounding the prediction error $x-\widehat x=x-\Phi(z)$ of the ROM. We do so with radii $\rho=(\rho_\parallel,\rho_\perp)$ for its two components~\eqref{eq:error_definitions}. Section~\ref{sec:exact_dynamics} derives the exact error dynamics, and Section~\ref{sec:core_cert} bounds their growth and coupling rates and constructs the bounds offline.

\subsection{Tangential and Transverse Error Dynamics}
\label{sec:exact_dynamics}
Analogous to RN-ROMPC~\cite{alora2023rompc}, we split the prediction error into two components via the nonlinear projection $\Phi\circ P$ onto the approximation manifold $\Phi(\mathbb R^r)$,
\begin{equation}
    e_\parallel:=Px-z\in\mathbb R^r,
    \qquad
    e_\perp:=x-\Phi(Px)\in\mathbb R^N.
    \label{eq:error_definitions}
\end{equation}
Here the \emph{tangential error} $e_\parallel$ measures prediction error in the reduced coordinates, whereas the \emph{transverse error} $e_\perp$ measures departure from the approximation manifold. By~\eqref{eq:left_inverse},
\begin{equation}
    Pe_\perp=0,
    \qquad
    x=\Phi(z+e_\parallel)+e_\perp.
    \label{eq:exact_state_decomposition}
\end{equation}
As $Pe_\perp=0$, the measurement $Px$ only determines $e_\parallel$ and leaves the high-dimensional $e_\perp$ unmeasured.

The dynamics of the transverse error are given by
\begin{equation}
    \dot e_\perp=\dot x-D\Phi(Px)P\dot x=Q(Px)\dot x,
    \label{eq:Q_def}
\end{equation}
where we define $Q(\zeta):=I-D\Phi(\zeta)P$. Differentiating~\eqref{eq:left_inverse} gives $PD\Phi(\zeta)=I$, so $Q(\zeta)^2=Q(\zeta)$, $\operatorname{range}Q(\zeta)=\ker P$, and $Q(\zeta)D\Phi(\zeta)=0$, so $Q(\zeta)$ is the oblique projection onto $\ker P$ along the tangent space of the manifold. For a linear lifting $\Phi(z)=Vz$, $Q=I-VP$ is constant.

Substituting~\eqref{eq:fom}, \eqref{eq:rom}, and~\eqref{eq:exact_state_decomposition} into $\dot e_\parallel=P\dot x-\dot z$ and $\dot e_\perp=Q(Px)\dot x$ gives the exact coupled dynamics
\begin{align}
    \dot e_\parallel
    &=\underbrace{PF(\Phi(z+e_\parallel)+e_\perp,u)-R(z,u)}_{=:\,g_\parallel(z,u,e_\parallel,e_\perp)}+Pd,
      \label{eq:exact_eparallel}\\
    \dot e_\perp
    &=\underbrace{Q(z+e_\parallel)F(\Phi(z+e_\parallel)+e_\perp,u)}_{=:\,g_\perp(z,u,e_\parallel,e_\perp)}+Q(z+e_\parallel)d.
      \label{eq:exact_eperp}
\end{align}
We call the disturbance-free part $g:=(g_\parallel,g_\perp)$ the \emph{error drift} and its value at zero error the \emph{residual}. The tangential residual $g_\parallel(z,u,0,0)=PF(\Phi(z),u)-R(z,u)$ measures how closely $R$ reproduces the on-manifold flow and vanishes when the ROM is projected as $R(z,u)=PF(\Phi(z),u)$. The transverse residual $g_\perp(z,u,0,0)$ is the rate at which the FOM flow leaves the manifold.

\subsection{Comparison System for the Error Norms}
\label{sec:core_cert}

For radii $\rho=(\rho_\parallel,\rho_\perp)\geq0$, the errors within the radii that satisfy $Pe_\perp=0$ form the compact convex set\footnote{\label{fn:norm}For simplicity of exposition, we consider the Euclidean norm. In practice, a weighted norm $\|\Theta^{1/2}v\|$, $\Theta\succ0$, is in general required to ensure that stable error dynamics yield negative rates $\mu_\parallel,\mu_\perp$ in~\eqref{eq:rates}~\cite{soderlind2006lognorm}, see Remark~\ref{rem:M}. This corresponds to a linear change of coordinates of $z$ and $x$, under which the left inverse~\eqref{eq:left_inverse} holds for the transformed $\Phi$ and $P$, and hence the analysis applies without modification.}
\begin{equation}
    \mathcal E_{\rho}:=\{(e_\parallel,e_\perp):Pe_\perp=0,\|e_\parallel\|\leq\rho_\parallel,\|e_\perp\|\leq\rho_\perp\}.
    \label{eq:error_set}
\end{equation}
Since $F$ is $C^1$ and $\Phi$ is $C^2$, the drift $g$ is $C^1$ in the errors, hence Lipschitz and one-sided Lipschitz on every compact convex set of errors~\cite[Lemma~1]{aminzare2014contraction}. 
We compute these bounds on user-chosen compact sets $\mathcal E_{\bar\rho}$ with a fixed $\bar\rho=(\bar\rho_\parallel,\bar\rho_\perp)>0$ and a compact region $\mathcal Z\subset\mathbb R^r$ of reduced states containing those the task visits, trading domain of validity against conservatism. 
In particular, at each $(z,u)\in\mathcal Z\times\mathcal U$, there exist continuous one-sided Lipschitz bounds $\mu_\parallel(z,u),\mu_\perp(z,u)$ and Lipschitz bounds $\gamma_{\parallel\perp}(z,u),\gamma_{\perp\parallel}(z,u)\geq0$ such that, for every $(e_\parallel,e_\perp)\in\mathcal E_{\bar\rho}$,
\begin{subequations}
\label{eq:bounds}
\begin{equation}
\begin{aligned}
    \langle e_\parallel,\,
    g_\parallel(z,u,e_\parallel,0)-g_\parallel(z,u,0,0)\rangle
    &\leq \mu_\parallel\|e_\parallel\|^2,\\
    \|g_\perp(z,u,e_\parallel,0)-g_\perp(z,u,0,0)\|
    &\leq \gamma_{\perp\parallel}\|e_\parallel\|,\\
    \|g_\parallel(z,u,e_\parallel,e_\perp)-g_\parallel(z,u,e_\parallel,0)\|
    &\leq \gamma_{\parallel\perp}\|e_\perp\|,\\
    \langle e_\perp,\,
    g_\perp(z,u,e_\parallel,e_\perp)-g_\perp(z,u,e_\parallel,0)\rangle
    &\leq \mu_\perp\|e_\perp\|^2.
\end{aligned}
\label{eq:rates}
\end{equation}
Furthermore, let the continuous bounds $b_\parallel(z,u),b_\perp(z,u)\geq0$ satisfy, for every $\|e_\parallel\|\leq\bar\rho_\parallel$,
\begin{equation}
\begin{aligned}
    b_\parallel(z,u)&\geq\|g_\parallel(z,u,0,0)\|+\|P\|\,\bar d,\\
    b_\perp(z,u)&\geq\|g_\perp(z,u,0,0)\|+\|Q(z+e_\parallel)\|\,\bar d.
\end{aligned}
\label{eq:b_def}
\end{equation}
\end{subequations}

The four rates in~\eqref{eq:rates} assemble into the matrix
\begin{equation}
    M(z,u):=
    \begin{bmatrix}
        \mu_\parallel(z,u) & \gamma_{\parallel\perp}(z,u)\\
        \gamma_{\perp\parallel}(z,u) & \mu_\perp(z,u)
    \end{bmatrix}.
    \label{eq:M_def}
\end{equation}
The growth rates $\mu_\parallel,\mu_\perp$ on the diagonal are negative if the respective error dynamics contract. The transverse error perturbs the reduced prediction through $\gamma_{\parallel\perp}$, and the tangential error drives the state off the manifold through $\gamma_{\perp\parallel}$. The map $b$ bounds the forcings that persist at zero error: the residual $g(z,u,0,0)$ and the disturbance $d$.

For a planned ROM trajectory $(z,u)$, we predict the radii $\rho$ alongside the ROM with the comparison system
\begin{equation}
    \dot\rho=M(z,u)\rho+b(z,u).
    \label{eq:rho_dynamics}
\end{equation}
\begin{theorem}[Coupled reduction-error certificate (CoRE)]
\label{thm:core_certificate}
Let $M$ and $b$ satisfy~\eqref{eq:bounds}, let $(z,u)$ be a planned ROM trajectory--input pair with values in $\mathcal Z\times\mathcal U$ on $[t_0,t_1]$, and let $x$ solve the FOM~\eqref{eq:fom} on $[t_0,t_1]$ under the same input $u$ and any disturbance with $\|d(t)\|\leq\bar d$, with the errors $e_\parallel,e_\perp$ defined by~\eqref{eq:error_definitions}. Let $\rho$ solve the comparison system~\eqref{eq:rho_dynamics} with $\rho_\parallel(t_0)\geq\|e_\parallel(t_0)\|$ and $\rho_\perp(t_0)\geq\|e_\perp(t_0)\|$, and suppose that $\rho<\bar\rho$ on $[t_0,t_1]$. Then $\|e_\parallel\|\leq\rho_\parallel$ and $\|e_\perp\|\leq\rho_\perp$ on $[t_0,t_1]$.
\end{theorem}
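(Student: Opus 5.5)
We argue by a comparison principle for the error norms $v_\parallel:=\|e_\parallel\|$ and $v_\perp:=\|e_\perp\|$. The plan is to show that, as long as the errors remain in $\mathcal E_{\bar\rho}$, the upper Dini derivatives satisfy the componentwise differential inequality
\begin{equation*}
D^+ v\leq M(z,u)\,v+b(z,u).
\end{equation*}
Since the off-diagonal entries $\gamma_{\parallel\perp},\gamma_{\perp\parallel}\geq0$, the matrix $M$ is Metzler, so the comparison system~\eqref{eq:rho_dynamics} is quasi-monotone (cooperative). The Kamke--Müller comparison theorem for quasi-monotone systems then gives $v\leq\rho$, provided $v(t_0)\leq\rho(t_0)$.

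For the differential inequality we fix $t$ with $e(t)\in\mathcal E_{\bar\rho}$. Where $v_\parallel(t)>0$, the norm is differentiable and $\dot v_\parallel=\langle e_\parallel,\dot e_\parallel\rangle/\|e_\parallel\|$. We split $\dot e_\parallel$ from~\eqref{eq:exact_eparallel} as
\begin{equation*}
\dot e_\parallel=[g_\parallel(e_\parallel,0)-g_\parallel(0,0)]+[g_\parallel(e_\parallel,e_\perp)-g_\parallel(e_\parallel,0)]+g_\parallel(0,0)+Pd.
\end{equation*}
The first bracket contributes at most $\mu_\parallel v_\parallel$ by the first line of~\eqref{eq:rates}. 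The second contributes at most $\gamma_{\parallel\perp}v_\perp$ by Cauchy--Schwarz and the third line. The rest contributes at most $b_\parallel$ by~\eqref{eq:b_def}.

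Symmetrically, we split $\dot e_\perp$ from~\eqref{eq:exact_eperp} as
\begin{equation*}
[g_\perp(e_\parallel,e_\perp)-g_\perp(e_\parallel,0)]+[g_\perp(e_\parallel,0)-g_\perp(0,0)]+g_\perp(0,0)+Q(z+e_\parallel)d.
\end{equation*}
This gives $\mu_\perp v_\perp+\gamma_{\perp\parallel}v_\parallel+b_\perp$. The pair $(e_\parallel,0)$ lies in $\mathcal E_{\bar\rho}$, and $Pe_\perp=0$ holds along solutions by~\eqref{eq:exact_state_decomposition}, so the bounds are applicable. At points where $v_\parallel=0$ or $v_\perp=0$, we use $D^+\|e\|\leq\|\dot e\|$. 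With the corresponding error zero, the one-sided term vanishes and the remaining terms give the same bound, since $\mu v=0$ there. Because $d$ is only piecewise continuous, we work with right derivatives throughout, and the Dini derivative handles the switching instants.

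The main obstacle is the comparison step itself, for two reasons. The inequality holds only while $e\in\mathcal E_{\bar\rho}$, and $M,b$ need not be Lipschitz. To handle both, we first perturb: let $\rho^\epsilon$ solve $\dot\rho^\epsilon=M\rho^\epsilon+b+\epsilon\mathbf 1$ with $\rho^\epsilon(t_0)=\rho(t_0)+\epsilon\mathbf 1$. By continuous dependence on this linear time-varying system with continuous coefficients, $\rho^\epsilon\to\rho$ uniformly, so $\rho^\epsilon<\bar\rho$ for small $\epsilon$. Let $t^*$ be the first time at which some component satisfies $v_i=\rho^\epsilon_i$, with strict inequality for the other component before $t^*$. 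On $[t_0,t^*]$ we have $v\leq\rho^\epsilon<\bar\rho$, so $e\in\mathcal E_{\bar\rho}$ and the Dini inequality holds. Suppose $i=\parallel$. Then at $t^*$,
\begin{equation*}
D^+(v_\parallel-\rho^\epsilon_\parallel)\leq\mu_\parallel(v_\parallel-\rho^\epsilon_\parallel)+\gamma_{\parallel\perp}(v_\perp-\rho^\epsilon_\perp)-\epsilon\leq-\epsilon<0.
\end{equation*}
This uses $\gamma_{\parallel\perp}\geq0$ together with $v_\perp\leq\rho^\epsilon_\perp$. A negative upper right derivative at the touching point contradicts the fact that $v_\parallel-\rho^\epsilon_\parallel<0$ just before $t^*$ and $=0$ at $t^*$. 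More precisely, we apply the standard lemma that a continuous function with $D^+<0$ wherever it equals $0$ cannot cross zero from below. Hence $v<\rho^\epsilon$ on all of $[t_0,t_1]$, and letting $\epsilon\to0$ yields $\|e_\parallel\|\leq\rho_\parallel$ and $\|e_\perp\|\leq\rho_\perp$.

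One further point needs care: the bounds are required only for $(z,u)\in\mathcal Z\times\mathcal U$, which is guaranteed by hypothesis. We also need the FOM solution to exist on $[t_0,t_1]$, which is assumed.
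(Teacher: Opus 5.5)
Your derivation of the componentwise inequality $D^+v\le Mv+b$ on $\mathcal E_{\bar\rho}$ is the same as the paper's: the same telescoping, Cauchy--Schwarz on everything except the one-sided term, and $D^+\|e\|\le\|\dot e\|$ at zero errors. The paper then simply invokes the quasimonotone comparison theorem of Lakshmikantham and Leela together with a continuation argument. There is a gap in your self-contained replacement for that citation.

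At the first touching time $t^*$ you obtain $D^+(v_\parallel-\rho^\epsilon_\parallel)(t^*)\le-\epsilon$. You claim this contradicts $v_\parallel-\rho^\epsilon_\parallel<0$ on $[t_0,t^*)$ with equality at $t^*$, but it does not. An upper \emph{right} Dini derivative only says what happens after $t^*$. Approaching zero from below only forces the \emph{left} lower derivative to be nonnegative. For instance, $\phi(t)=-|t-t^*|$ has all three properties. You bound only right derivatives, and at jump instants of $u$ or $d$ the one-sided derivatives of $v$ and $\rho^\epsilon$ differ, so the first-touch argument does not go through as written.

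The lemma you fall back on is correct in the sense that the function cannot become positive. However, its proof uses the last time the function is $\le0$, not the first zero, and it yields $v\le\rho^\epsilon$, not the strict $v<\rho^\epsilon$ you state. You also cannot apply it to $\phi_\parallel:=v_\parallel-\rho^\epsilon_\parallel$ alone. Your bound $D^+\phi_\parallel\le-\epsilon$ at a zero of $\phi_\parallel$ needs $v_\perp\le\rho^\epsilon_\perp$ and $e\in\mathcal E_{\bar\rho}$ at that instant, which is exactly what is being proved.

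The repair is short.
\begin{itemize}
\item Set $\phi:=\max_i(v_i-\rho^\epsilon_i)$, suppose $\phi(s)>0$ for some $s\le t_1$, and let $\tau:=\sup\{t\in[t_0,s]:\phi(t)\le0\}$.
\item Then $\phi(\tau)=0$ and $\phi>0$ on $(\tau,s]$. Also $v(\tau)\le\rho^\epsilon(\tau)<\bar\rho$ puts $(e_\parallel(\tau),e_\perp(\tau))$ in $\mathcal E_{\bar\rho}$, so your Dini inequality holds at $\tau$.
\item Each active component has $D^+\phi_i(\tau)\le-\epsilon$, using $\gamma\ge0$ and $v_j(\tau)\le\rho^\epsilon_j(\tau)$. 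Each inactive component stays negative near $\tau$ by continuity.
\item Hence $\phi<0$ just after $\tau$, a contradiction.
\end{itemize}
This gives $v\le\rho^\epsilon$ on $[t_0,t_1]$, which is all you need as $\epsilon\to0$, and it absorbs the continuation argument. Alternatively, you can keep the first-touch argument if you also bound the left derivatives, using one-sided limits $u(t^{*-})\in\mathcal U$ and $\|d(t^{*-})\|\le\bar d$.

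A minor point: the perturbation is needed to obtain the strict $-\epsilon$, not because of a Lipschitz issue. The comparison system is linear in $\rho$, and $\rho^\epsilon-\rho=\epsilon\Psi$ with $\dot\Psi=M\Psi+\mathbf 1$, $\Psi(t_0)=\mathbf 1$. This gives the uniform convergence directly, without continuity of the coefficients in time.
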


\begin{proof}
Throughout, we omit the time argument and the $(z,u)$-dependence. We first show that, for $(e_\parallel,e_\perp)\in\mathcal E_{\bar\rho}$,
\begin{equation}
    D^+\!\begin{bmatrix}\|e_\parallel\|\\\|e_\perp\|\end{bmatrix}
    \leq M\begin{bmatrix}\|e_\parallel\|\\\|e_\perp\|\end{bmatrix}+b,
    \label{eq:core_bound_split}
\end{equation}
with $D^+$ acting componentwise. For $e_\parallel\neq0$, \eqref{eq:exact_eparallel} and the upper Dini derivative of the Euclidean norm give
\begin{align*}
D^+\|e_\parallel\|
&=\frac{e_\parallel^\top}{\|e_\parallel\|}
\bigl[g_\parallel(e_\parallel,e_\perp)+Pd\bigr]\\
&\leq
\frac{e_\parallel^\top}{\|e_\parallel\|}
\bigl[g_\parallel(e_\parallel,0)-g_\parallel(0,0)\bigr] +
\|g_\parallel(0,0)\|\\
&\quad+
\|g_\parallel(e_\parallel,e_\perp)-g_\parallel(e_\parallel,0)\|+\|Pd\|\\
&\leq
\mu_\parallel\|e_\parallel\|
+\gamma_{\parallel\perp}\|e_\perp\|
+b_\parallel,
\end{align*}
where the first inequality follows from telescoping $g_\parallel(e_\parallel,e_\perp)$ around $(e_\parallel,0)$ and $(0,0)$ and from Cauchy--Schwarz (except the first term, which keeps its sign) and the second from~\eqref{eq:bounds} and $\|d\|\leq\bar d$.
The case $e_\parallel=0$ follows from $D^+\|e_\parallel\|\leq\|\dot e_\parallel\|$. A similar argument applied to~\eqref{eq:exact_eperp} gives the second row of~\eqref{eq:core_bound_split}. 
Since $M$ is Metzler, the right-hand side of~\eqref{eq:rho_dynamics} is quasimonotone. Hence, the comparison theorem for systems of differential inequalities~\cite[Cor.~1.7.1]{lakshmikantham1969differential} gives $\|e_\parallel\|\leq\rho_\parallel$ and $\|e_\perp\|\leq\rho_\perp$ as long as $(e_\parallel,e_\perp)\in\mathcal E_{\bar\rho}$, which $Pe_\perp=0$ from~\eqref{eq:exact_state_decomposition}, these bounds, and $\rho<\bar\rho$ ensure on $[t_0,t_1]$ by a continuation argument.
\end{proof}

\begin{corollary}[Radius-dependent bounds]
\label{cor:radius}
Let $M(z,u,\rho)$ and $b(z,u,\rho)$ be continuous, locally Lipschitz and componentwise nondecreasing in $\rho$, and satisfy~\eqref{eq:bounds} on $\mathcal E_\rho$ for every $\rho\geq0$. Then Theorem~\ref{thm:core_certificate} holds for $\dot\rho=M(z,u,\rho)\rho+b(z,u,\rho)$ without the condition $\rho<\bar\rho$.
\end{corollary}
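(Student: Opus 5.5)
The plan is to reduce Corollary~\ref{cor:radius} to the proof of Theorem~\ref{thm:core_certificate}. The idea is to freeze the radius-dependent bounds along the solution $\rho$ of $\dot\rho=M(z,u,\rho)\rho+b(z,u,\rho)$, which has already been computed. Existence and uniqueness of this solution on $[t_0,t_1]$ follow from local Lipschitz continuity in $\rho$. If $\rho$ blows up, the statement is understood on the maximal interval where it exists. Since $b\geq0$ and $M$ is Metzler, $\rho$ stays nonnegative.

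The key step replaces the fixed set $\mathcal E_{\bar\rho}$ by the time-varying set $\mathcal E_{\rho(t)}$. Fix any $t$ at which $(e_\parallel(t),e_\perp(t))\in\mathcal E_{\rho(t)}$. By hypothesis, the bounds \eqref{eq:bounds} hold on $\mathcal E_{\rho(t)}$ with $M(z,u,\rho(t))$ and $b(z,u,\rho(t))$. The pointwise computation in the proof of Theorem~\ref{thm:core_certificate} then gives
\[
D^+ \begin{bmatrix}\|e_\parallel\|\\ \|e_\perp\|\end{bmatrix}\leq M(z,u,\rho)\begin{bmatrix}\|e_\parallel\|\\ \|e_\perp\|\end{bmatrix}+b(z,u,\rho)
\]
at that time. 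This computation is only a telescoping argument plus Cauchy--Schwarz at a single instant, so it does not depend on the set being fixed. Write $w:=(\|e_\parallel\|,\|e_\perp\|)$. Componentwise monotonicity in $\rho$, together with $w\le\rho$, would also allow $M(z,u,\rho)$ and $b(z,u,\rho)$ to be replaced by $M(z,u,w)$ and $b(z,u,w)$ where that is convenient. However, I will keep the argument frozen along $\rho$.

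The main obstacle is that the differential inequality is valid only while $w\leq\rho$. The comparison result \cite[Cor.~1.7.1]{lakshmikantham1969differential} cannot be applied blindly, so I will run a direct first-crossing argument instead.

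First, introduce a strict margin. Let $\rho^\varepsilon$ solve $\dot\rho^\varepsilon=M(z,u,\rho^\varepsilon)\rho^\varepsilon+b(z,u,\rho^\varepsilon)+\varepsilon\mathbf 1$ with $\rho^\varepsilon(t_0)=\rho(t_0)+\varepsilon\mathbf 1$. By continuous dependence, $\rho^\varepsilon\to\rho$ uniformly on $[t_0,t_1]$.

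Suppose some component of $w$ first reaches the corresponding component of $\rho^\varepsilon$ at a time $\tau$, say $w_\parallel(\tau)=\rho^\varepsilon_\parallel(\tau)$, while $w\le\rho^\varepsilon$ on $[t_0,\tau]$. Then $(e_\parallel,e_\perp)(\tau)\in\mathcal E_{\rho^\varepsilon(\tau)}$, using $Pe_\perp=0$ from \eqref{eq:exact_state_decomposition}. The bounds therefore hold with parameter $\rho^\varepsilon(\tau)$. Because the off-diagonal entries are nonnegative and $w_\perp(\tau)\le\rho^\varepsilon_\perp(\tau)$, we get
\[
D^+w_\parallel(\tau)\le \mu_\parallel w_\parallel+\gamma_{\parallel\perp}\rho^\varepsilon_\perp+b_\parallel<\dot\rho^\varepsilon_\parallel(\tau).
\]
This contradicts the first crossing. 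This is exactly the quasimonotonicity used in the theorem. Hence $w<\rho^\varepsilon$ on $[t_0,t_1]$, and letting $\varepsilon\downarrow0$ gives $w\le\rho$.

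No condition $\rho<\bar\rho$ is needed, because the validity domain $\mathcal E_{\rho^\varepsilon(t)}$ moves with the radii themselves, so the continuation argument never leaves it. There is one subtlety. The bounds are evaluated at $\rho^\varepsilon$ rather than $\rho$, and this is allowed because they hold for every $\rho\ge0$. The monotonicity assumption is what makes the larger-set bounds consistent. Continuity is used to pass to the limit.
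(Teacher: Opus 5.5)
Your proof is correct in substance but takes a different route from the paper.

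\textbf{How the two routes differ.} The paper evaluates the bounds at the error norms themselves, $v:=(\|e_\parallel\|,\|e_\perp\|)$. Because $(e_\parallel,e_\perp)\in\mathcal E_v$ holds trivially, the derivation of \eqref{eq:core_bound_split} yields $D^+v\leq G(t,v):=M(z,u,v)v+b(z,u,v)$ at every time, with no validity condition to maintain. Monotonicity of $M,b$ together with $\gamma\geq0$ is exactly what makes this nonlinear $G$ quasimonotone. The standard comparison theorem then applies directly, with no continuation step and no $\varepsilon$-perturbation.

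You instead freeze the coefficients along $\rho^\varepsilon$, so you compare against a linear time-varying Metzler system. The price is that your inequality holds only while $w\leq\rho^\varepsilon$, which forces the hand-made continuation argument.

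\textbf{What your route buys.} At the touching time, the bound on $D^+w_i$ and the right-hand side of $\dot\rho^\varepsilon_i$ use the coefficients at the same parameter $\rho^\varepsilon(\tau)$. So the componentwise monotonicity of $M$ and $b$ in $\rho$ is never used. Your argument proves the corollary without it: continuity and local Lipschitz continuity (for existence and continuous dependence), $\gamma\geq0$, and $b\geq0$ suffice. Your closing remark that monotonicity ``makes the larger-set bounds consistent'' therefore misplaces its role; it is needed only in the paper's route.

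\textbf{One step needs repair.} Since $D^+$ is the upper right Dini derivative, $D^+(w_\parallel-\rho^\varepsilon_\parallel)(\tau)<0$ at a first touching time $\tau$ does not contradict $w_\parallel<\rho^\varepsilon_\parallel$ on $[t_0,\tau)$; consider $t\mapsto-|t-\tau|$. So the strict conclusion $w<\rho^\varepsilon$ does not follow as written.

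The fix is to let $\tau$ instead be the infimum of the times at which $w\leq\rho^\varepsilon$ fails. Then:
\begin{itemize}
\item $w\leq\rho^\varepsilon$ on $[t_0,\tau]$, so the error lies in $\mathcal E_{\rho^\varepsilon(\tau)}$ and your estimate applies at $\tau$.
\item Some component $i$ satisfies $w_i(\tau)=\rho^\varepsilon_i(\tau)$ and $w_i(t_n)>\rho^\varepsilon_i(t_n)$ along a sequence $t_n\downarrow\tau$.
\item Hence $D^+(w_i-\rho^\varepsilon_i)(\tau)\geq0$, which contradicts your strict estimate.
\end{itemize}
This gives $w\leq\rho^\varepsilon$ on $[t_0,t_1]$ (non-strict), which is all you need before letting $\varepsilon\downarrow0$.
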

\begin{proof}
Let $v:=(\|e_\parallel\|,\|e_\perp\|)$. Since $(e_\parallel,e_\perp)\in\mathcal E_v$, the derivation of~\eqref{eq:core_bound_split} with the bounds evaluated at $\rho=v$ gives $D^+v\leq G(t,v):=M(z,u,v)v+b(z,u,v)$ at every $t$. For $0\leq w\leq w'$ with $w_i=w_i'$, the monotonicity of $M$ and $b$ and $\gamma\geq0$ give $G_i(t,w)\leq G_i(t,w')$, so $G$ is quasimonotone nondecreasing and locally Lipschitz in $w$, and the comparison theorem~\cite[Cor.~1.7.1]{lakshmikantham1969differential} yields $v\leq\rho$ on $[t_0,t_1]$.
\end{proof}

\emph{Offline construction:} Any offline bounding technique can be used to get~\eqref{eq:bounds}, e.g., composing Lipschitz constants of the FOM nonlinearities~\cite{alora2023rompc}. Sharper values follow from the Jacobian of the drift with respect to the errors. Abbreviating $\zeta:=z+e_\parallel$, $x_e:=\Phi(\zeta)+e_\perp$, $F:=F(x_e,u)$, $F_x:=D_xF(x_e,u)$, $Q:=Q(\zeta)$, and $D\Phi:=D\Phi(\zeta)$, its blocks at any $(z,u,e_\parallel,e_\perp)$, ordered as in~\eqref{eq:M_def}, are
\begin{equation*}
    \begin{bmatrix}
        D_{e_\parallel}g_\parallel & D_{e_\perp}g_\parallel\\
        D_{e_\parallel}g_\perp & D_{e_\perp}g_\perp
    \end{bmatrix}
    =
    \begin{bmatrix}
        PF_xD\Phi & PF_x\\
        QF_xD\Phi-D^2\Phi[PF] & QF_x
    \end{bmatrix},
\end{equation*}
where $D^2\Phi[v]$ denotes the derivative of $D\Phi$ at $\zeta$ in direction $v$. On the convex set $\mathcal E_{\bar\rho}$, one may take $\gamma_{\perp\parallel},\gamma_{\parallel\perp}$ as the suprema of the norms of the off-diagonal blocks and $\mu_\parallel,\mu_\perp$ as the suprema of the logarithmic norms $\mu_2(A):=\lambda_{\max}((A{+}A^\top)/2)$ of the diagonal blocks~\cite[Lemma~1]{aminzare2014contraction}. The blocks with respect to $e_\parallel$ are evaluated at $e_\perp=0$, as in the first two inequalities of~\eqref{eq:rates}. Those with respect to $e_\perp$ act on $\ker P$ only.

These suprema can be upper-bounded in many ways. We use three, one in each example of Section~\ref{sec:experiments}. For polynomial dynamics with a linear lift, closed-form bounds in $(z,u,\rho)$ hold on the current tube (Corollary~\ref{cor:radius}, Section~\ref{sec:burgers}). Otherwise, we fit upper envelopes to the block norms at samples over $\mathcal Z\times\mathcal U\times\mathcal E_{\bar\rho}$ (Section~\ref{sec:chain}).\footnote{\label{fn:cover}Lipschitz margins covering the gaps between samples make the envelopes valid on the whole sampling domain~\cite{bobiti2018sampling} (cf. Section~\ref{sec:chain}).} When the FOM is too large to sample the block Jacobian, we directly fit $M$ and $b$ to prediction errors recorded along closed-loop trajectories~\cite{alora2023rompc}. The bounds~\eqref{eq:bounds} are then checked on the data rather than certified (Section~\ref{sec:soft_robot}).

\begin{remark}[Stability and structure of $M$]\label{rem:M}
While Theorem~\ref{thm:core_certificate} applies to general matrices $M$ constructed according to~\eqref{eq:bounds}, the stability of the growth of the radii depends on whether $M$ admits a Hurwitz constant bound. For $M(z,u)\leq \bar M$ and $b(z,u)\leq\bar b$ on $\mathcal Z\times\mathcal U$, $\bar M$ is Hurwitz iff $\bar\mu_\parallel,\bar\mu_\perp<0$ and $\bar\mu_\parallel\bar\mu_\perp>\bar\gamma_{\parallel\perp}\bar\gamma_{\perp\parallel}$. When $\bar M$ is Hurwitz, $\rho$ is ultimately bounded by $-\bar M^{-1}\bar b$. Negative $\bar\mu_\parallel,\bar\mu_\perp$ require stable error dynamics and a suitable norm or change of coordinates (footnote~\ref{fn:norm}).
As we see in the numerical example (Sec.~\ref{sec:burgers}), 
CoRE-MPC can also be applied when $M$ is not Hurwitz, since re-centering on the measurement can reset $\rho_\parallel$ at every update. The carried transverse radius then stays bounded if, over one sampling interval, its own contraction outweighs the growth it receives through the coupling (thus, a negative $\mu_\perp$ is necessary but not sufficient). 
For invariant manifolds~\cite{haller2016ssm}, $Q(\zeta)F(\Phi(\zeta),0)=0$, the autonomous flow never leaves the manifold, so $\gamma_{\perp\parallel}(z,0)=0$ and $M(z,0)$ is upper triangular. Thus, $M(z,0)$ is Hurwitz iff $\mu_\parallel,\mu_\perp<0$. This is the setting of RN-ROMPC~\cite{alora2023rompc}. In contrast, for general POD projections (Section~\ref{sec:burgers}), the general structure of $M$ proposed in CoRE-MPC is required.
\end{remark}

\section{CoRE-MPC: Predictive Control with Coupled Error Certificates}
\label{sec:core_mpc}

We now use the radii of Theorem~\ref{thm:core_certificate} to tighten the
FOM constraints inside a reduced-order MPC and establish its closed-loop guarantees (Theorem~\ref{thm:core_mpc}).

\emph{Constraint tightening:} By~\eqref{eq:exact_state_decomposition}, $x=\Phi(z+e_\parallel)+e_\perp$ with $(e_\parallel,e_\perp)\in\mathcal E_\rho$ whenever the errors lie within the radii $\rho$. As in Section~\ref{sec:core_cert}, let $c_{j}^{\parallel}(z),c_{j}^{\perp}(z)\geq0$ satisfy, for every $z\in\mathcal Z$ and $(e_\parallel,e_\perp)\in\mathcal E_{\bar\rho}$,
\begin{equation}
\begin{aligned}
    h_j(\Phi(z+e_\parallel))-h_j(\Phi(z))
    &\leq c_{j}^{\parallel}(z)\|e_\parallel\|,\\
    h_j(\Phi(z+e_\parallel)+e_\perp)-h_j(\Phi(z+e_\parallel))
    &\leq c_{j}^{\perp}(z)\|e_\perp\|,
\end{aligned}
\label{eq:tightening_constants}
\end{equation}
e.g., bounds over $\mathcal E_{\bar\rho}$ on $\|Dh_j\,D\Phi\|$ and on $\|Dh_j\|_{\ker P}$, the maximum of $\|Dh_j\,v\|$ over unit $v\in\ker P$. Adding the two lines, $(e_\parallel,e_\perp)\in\mathcal E_\rho$ implies $h_j(x)\leq\widehat h_j(z,\rho)$ for the tightened constraint
\begin{equation}
    \widehat h_j(z,\rho)
    :=h_j(\Phi(z))
      +c_{j}^{\parallel}(z)\rho_\parallel
      +c_{j}^{\perp}(z)\rho_\perp,
    \label{eq:affine_tightening}
\end{equation}
which CoRE-MPC enforces in place of $h_j\leq0$. For affine constraints $h_j=a_j^\top x-\bar h_j$, $c_{j}^{\perp}=\|a_j^\top\|_{\ker P}$ is exact.

Let $T\geq\delta$ be the prediction horizon. Given nonnegative stage and terminal costs $\ell$ and $V_f$ and writing
$\chi:=(z,\rho)$ for the augmented prediction state, CoRE-MPC solves at each
$t_k$ the following optimization problem:
\begin{subequations}
\label{eq:core_mpc_ocp}
\begin{align}
&V_T(Px(t_k),\sigma_k):=\min_{z,\rho,u}\int_0^T\!\ell(z(\tau),u(\tau))\,\mathrm d\tau+V_f(z(T))
\label{eq:core_mpc_cost}\\
&\text{s.t. }\ \dot z(\tau)=R(z(\tau),u(\tau)),\quad\forall\tau\in[0,T],
\label{eq:core_mpc_dyn}\\
&\hphantom{\text{s.t. }\ }\dot\rho(\tau)=M(z(\tau),u(\tau))\rho(\tau)+b(z(\tau),u(\tau)),
\label{eq:core_mpc_rhodyn}\\
&\hphantom{\text{s.t. }\ }\|Px(t_k)-z(0)\|\leq\rho_\parallel(0),\quad
  \rho_\perp(0)=\sigma_k,
\label{eq:core_mpc_initial}\\
&\hphantom{\text{s.t. }\ }\widehat h_j(z(\tau),\rho(\tau))\leq0,\quad j=1,\ldots,n_h,
\label{eq:core_mpc_tightening}\\
&\hphantom{\text{s.t. }\ }(z(\tau),u(\tau))\in\mathcal Z\times\mathcal U,\quad
  0\leq\rho(\tau)<\bar\rho,
\label{eq:core_mpc_domain}\\
&\hphantom{\text{s.t. }\ }\chi(T)\in\mathcal X_f.
\label{eq:core_mpc_terminal}
\end{align}
\end{subequations}
Here the initial condition~\eqref{eq:core_mpc_initial} leaves the nominal
state $z(0)$ free within the initial tube, as is standard in tube MPC. In particular, it admits any $z(0)$ between the measurement $Px(t_k)$, with $\rho_\parallel(0)=0$, and the shifted previous plan. The constraint~\eqref{eq:core_mpc_domain}
keeps the plan inside the domain
$\mathcal Z\times\mathcal U$ of the bounds~\eqref{eq:bounds},
the scalar $\sigma_k\geq\|x(t_k)-\Phi(Px(t_k))\|$ is a transverse-error bound
maintained by Algorithm~\ref{alg:core}, and the terminal set $\mathcal X_f$ is specified in
Assumption~\ref{ass:core_terminal}. Let $(z_k^\star,\rho_k^\star,u_k^\star)$
denote a minimizer of~\eqref{eq:core_mpc_ocp}, assumed to exist. Algorithm~\ref{alg:core}
summarizes offline design and online operation.

\begin{algorithm}[H]
\caption{CoRE-MPC}
\label{alg:core}
\begin{algorithmic}[1]
\Statex \textbf{Offline}
\State Choose the ROM $(P,\Phi,R)$ satisfying~\eqref{eq:left_inverse}; fix
$\bar\rho$.
\State Compute the bounds~\eqref{eq:bounds} (Section~\ref{sec:core_cert}).
\State Construct the tightened constraints $\widehat h_j$ via~\eqref{eq:affine_tightening}.
\State Design terminal ingredients $(\mathcal X_f,\kappa_f)$
(Ass.~\ref{ass:core_terminal}) and a terminal cost $V_f$.
\Statex \textbf{Online}, from $k=0$
\State Measure $Px(t_k)$; solve~\eqref{eq:core_mpc_ocp} for
$(z_k^\star,\rho_k^\star,u_k^\star)$.
\State Apply $u(t_k+\tau)=u_k^\star(\tau)$, $\tau\in[0,\delta)$.
\State Set $\sigma_{k+1}=\rho_{\perp,k}^\star(\delta)$, $k\leftarrow k+1$;
go to step~5.
\end{algorithmic}
\end{algorithm}

Compared to a nominal ROM-MPC scheme, \eqref{eq:core_mpc_ocp} adds the two
scalar states $\rho$ and replaces the state constraints by the precomputed
tightened constraints $\widehat h_j$, so its computational demand is comparable to that of a nominal
MPC for an $(r{+}2)$-dimensional system and the FOM vector field $F$ is never evaluated
online. Through $M(z,u)$ and $b(z,u)$, the tube grows where the local reduction
geometry is unfavorable, so the optimizer trades performance against
tube growth.

\emph{Closed-loop guarantees:} The following assumption captures the
required terminal ingredients for the augmented state $\chi$, whose dynamics we write as $\dot\chi=f_{\rm C}(\chi,u):=(R(z,u),\,M(z,u)\rho+b(z,u))$.

\begin{assumption}[Terminal ingredients]
\label{ass:core_terminal}
There exist a closed terminal set
$\mathcal X_f\subset\mathcal Z\times[0,\bar\rho)$ and a continuous terminal law
$\kappa_f:\mathcal X_f\to\mathcal U$ such that, for every $\chi=(z,\rho)\in\mathcal X_f$,
\begin{align}
&\widehat h_j(z,\rho)\leq0,
\quad j=1,\ldots,n_h,
\label{eq:terminal_constraints}\\
&\mathcal X_f\text{ is positively invariant under }
  \dot\chi=f_{\rm C}(\chi,\kappa_f(\chi)).
\label{eq:terminal_invariance}
\end{align}
\end{assumption}

These are standard MPC terminal conditions (admissibility,
viability)~\cite{chen1998quasi}, posed for the augmented
state $\chi=(z,\rho)$ as in~\cite{kohler2021robust}.
Viability now requires the terminal region to tolerate the growth of
$\rho$, which stability of the ROM alone does not ensure, since nothing is
reset within a plan. Under a constant Hurwitz bound $\bar M$, $\bar b$ as
in Remark~\ref{rem:M}, valid on $\mathcal X_f$, the box $[0,\rho_{\rm f}]$ with
$\bar M\rho_{\rm f}+\bar b\leq0$, $\rho_{\rm f}<\bar\rho$, is positively invariant
for~\eqref{eq:core_mpc_rhodyn} and serves as the radius component of a
product terminal set. Online, \eqref{eq:core_mpc_rhodyn} retains the state-
and input-dependent $M(z,u)$ and $b(z,u)$.

\begin{theorem}[CoRE-MPC closed-loop guarantees]
\label{thm:core_mpc}
Let $M$ and $b$ satisfy~\eqref{eq:bounds}, let $c_j^\parallel,c_j^\perp$ satisfy~\eqref{eq:tightening_constants}, let Assumption~\ref{ass:core_terminal} hold,
let $\sigma_0\geq\|e_\perp(0)\|$, and suppose that the CoRE-MPC problem~\eqref{eq:core_mpc_ocp} is feasible at $t_0=0$. Then, under
Algorithm~\ref{alg:core}, \eqref{eq:core_mpc_ocp} is feasible at every
$t_k$ and the closed-loop FOM satisfies
\begin{equation}
    u(t)\in\mathcal U,\qquad h(x(t))\leq0,\qquad t\geq0.
    \label{eq:fom_constraint_guarantee}
\end{equation}
\end{theorem}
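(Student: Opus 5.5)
The proof is by induction over the update times $t_k$, using a shifted candidate for recursive feasibility and Theorem~\ref{thm:core_certificate} for constraint satisfaction between updates. The induction hypothesis at $t_k$ is that \eqref{eq:core_mpc_ocp} is feasible and that $\sigma_k\geq\|e_\perp(t_k)\|=\|x(t_k)-\Phi(Px(t_k))\|$. At $k=0$ both hold by assumption.

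\emph{Step 1: constraint satisfaction on $[t_k,t_{k+1}]$.} Let $(z_k^\star,\rho_k^\star,u_k^\star)$ be the minimizer at $t_k$. Define the errors \eqref{eq:error_definitions} along the closed-loop FOM trajectory relative to $z_k^\star(\cdot-t_k)$. By \eqref{eq:core_mpc_initial}, $\|e_\parallel(t_k)\|=\|Px(t_k)-z_k^\star(0)\|\leq\rho^\star_{\parallel,k}(0)$ and $\|e_\perp(t_k)\|\leq\sigma_k=\rho^\star_{\perp,k}(0)$. The plan stays in $\mathcal Z\times\mathcal U$ and satisfies $\rho_k^\star<\bar\rho$ by \eqref{eq:core_mpc_domain}. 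The applied input equals $u_k^\star$ on $[t_k,t_{k+1})$, and piecewise-continuous inputs are handled as in Theorem~\ref{thm:core_certificate}. Theorem~\ref{thm:core_certificate} therefore gives $(e_\parallel,e_\perp)\in\mathcal E_{\rho_k^\star(\tau)}\subseteq\mathcal E_{\bar\rho}$ for $\tau\in[0,\delta]$. The decomposition \eqref{eq:exact_state_decomposition} together with \eqref{eq:tightening_constants} gives $h_j(x(t_k+\tau))\leq\widehat h_j(z_k^\star(\tau),\rho_k^\star(\tau))\leq0$ by \eqref{eq:core_mpc_tightening}. Input constraints hold since $u_k^\star\in\mathcal U$. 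In particular $\|e_\perp(t_{k+1})\|\leq\rho^\star_{\perp,k}(\delta)=\sigma_{k+1}$, so the second part of the hypothesis propagates. The transverse error is intrinsic, $x-\Phi(Px)$, and does not depend on the nominal trajectory, so re-centering does not affect it.

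\emph{Step 2: recursive feasibility.} The candidate at $t_{k+1}$ is the shifted plan $\tilde z(\tau)=z_k^\star(\tau+\delta)$, $\tilde\rho(\tau)=\rho_k^\star(\tau+\delta)$, $\tilde u(\tau)=u_k^\star(\tau+\delta)$ for $\tau\in[0,T-\delta]$. On $[T-\delta,T]$ it is extended by the terminal law, $\dot{\tilde\chi}=f_{\rm C}(\tilde\chi,\kappa_f(\tilde\chi))$, starting from $\chi_k^\star(T)\in\mathcal X_f$.

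\begin{itemize}
\item The dynamics \eqref{eq:core_mpc_dyn}--\eqref{eq:core_mpc_rhodyn} hold by construction.
\item The initial condition \eqref{eq:core_mpc_initial} requires $\|Px(t_{k+1})-\tilde z(0)\|\leq\tilde\rho_\parallel(0)$, which is exactly the tangential bound from Step~1 at $\tau=\delta$. It also requires $\tilde\rho_\perp(0)=\rho^\star_{\perp,k}(\delta)=\sigma_{k+1}$, which holds by step~7 of Algorithm~\ref{alg:core}.
\item On the shifted portion, \eqref{eq:core_mpc_tightening}--\eqref{eq:core_mpc_domain} are inherited from the previous optimal plan.
\item On the appended portion, positive invariance \eqref{eq:terminal_invariance} keeps $\tilde\chi\in\mathcal X_f\subset\mathcal Z\times[0,\bar\rho)$. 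Then \eqref{eq:terminal_constraints} gives the tightened constraints, $\kappa_f\in\mathcal U$ gives the input constraint, and $\tilde\chi(T)\in\mathcal X_f$ gives \eqref{eq:core_mpc_terminal}.
\end{itemize}

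Two nonnegativity facts are also needed. First, $\tilde\rho\geq0$ holds on the shift by \eqref{eq:core_mpc_domain}. Second, on the terminal part it follows from $\mathcal X_f\subset\mathcal Z\times[0,\bar\rho)$.

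\emph{Main obstacle.} The delicate point is the terminal extension: the closed-loop solution under $\kappa_f$ must exist on $[T-\delta,T]$ and remain in $\mathcal X_f$. I would read \eqref{eq:terminal_invariance} as asserting existence of solutions that remain in the closed set $\mathcal X_f$. Local Lipschitzness of $R$ and continuity of $M,b,\kappa_f$ give local existence, and compactness of $\mathcal Z\times[0,\bar\rho]$ prevents finite escape. A secondary subtlety is that Theorem~\ref{thm:core_certificate} needs the strict inequality $\rho<\bar\rho$, which \eqref{eq:core_mpc_domain} provides on the closed interval $[0,\delta]$. Induction over $k$ then yields feasibility at every $t_k$, and the union of the intervals from Step~1 gives \eqref{eq:fom_constraint_guarantee} for all $t\geq0$.
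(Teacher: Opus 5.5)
Your proposal is correct and follows the paper's proof essentially step for step. It uses induction over $t_k$ with the hypothesis $\sigma_k\geq\|e_\perp(t_k)\|$, applies Theorem~\ref{thm:core_certificate} to the optimal plan on $[t_k,t_{k+1}]$ for both constraint satisfaction and the initial condition~\eqref{eq:core_mpc_initial} of the shifted candidate, and appends $\kappa_f$ on $[T-\delta,T]$ using invariance~\eqref{eq:terminal_invariance}. Your extra remarks, that $e_\perp$ does not depend on the re-centering and that the terminal-law extension exists on the compact $\mathcal X_f$, fill in details the paper leaves implicit without changing the argument.
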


\begin{proof}
The argument is that of~\cite[Thm.~1]{alora2023rompc}, with the coupled
comparison system~\eqref{eq:rho_dynamics} in place of two one-way coupled scalar
tubes. \emph{Constraint satisfaction:}
If~\eqref{eq:core_mpc_ocp} is feasible at $t_k$ with
$\sigma_k\geq\|e_\perp(t_k)\|$, then the initial-tube constraint~\eqref{eq:core_mpc_initial} gives
$(e_\parallel(t_k),e_\perp(t_k))\in\mathcal E_{\rho_k^\star(0)}$, the domain constraint~\eqref{eq:core_mpc_domain} keeps the
certificate valid, and Theorem~\ref{thm:core_certificate} gives
$(e_\parallel(t_k+\tau),e_\perp(t_k+\tau))\in\mathcal E_{\rho_k^\star(\tau)}$ for $\tau\in[0,\delta]$.
The tightened constraints~\eqref{eq:core_mpc_tightening} then imply
$h(x)\leq0$ there by~\eqref{eq:affine_tightening}, and the input constraint in~\eqref{eq:core_mpc_domain} gives $u\in\mathcal U$. \emph{Recursive feasibility:} The same
containment at $\tau=\delta$ is the initial
condition~\eqref{eq:core_mpc_initial} for the shifted plan
$(\chi_k^\star(\cdot+\delta),u_k^\star(\cdot+\delta))$ at $t_{k+1}$, and
appending $\kappa_f$ on $[T-\delta,T]$ keeps $\chi$ in $\mathcal X_f$
by invariance~\eqref{eq:terminal_invariance}. Induction over $k$ completes the proof.
\end{proof}

If, in addition, $\ell$ is positive definite in $(z,u)$ and $V_f$ is $C^1$ with $DV_f(z)\,R(z,\kappa_f(\chi))\leq-\ell(z,\kappa_f(\chi))$ on $\mathcal X_f$, the value-decrease argument of~\cite{chen1998quasi} applies to the shifted candidate solution of the proof, so the planned nominal state $z_k^\star(0)$ converges to the origin. The measured state $Px(t_k)$ stays within $\bar\rho_\parallel$ of it by the initial-tube constraint~\eqref{eq:core_mpc_initial}.

\begin{figure*}[t]
\centering
\includegraphics[width=\textwidth]{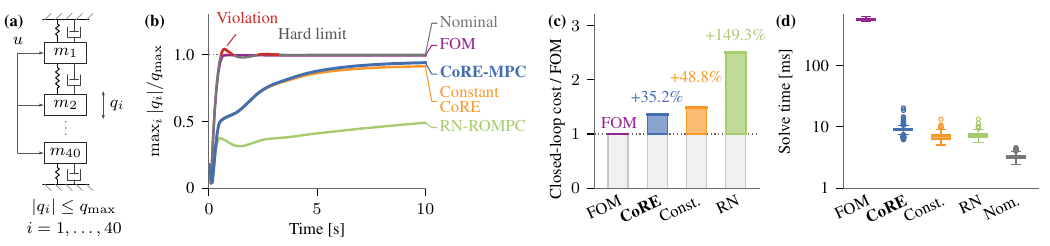}
\caption{Oscillator chain. FOM-MPC (purple), CoRE-MPC (blue), Constant CoRE (orange), RN-ROMPC (green), and nominal ROM-MPC (gray). (a) Forty masses driven by one actuator with the shape of the first mode; every displacement is constrained. (b) Largest normalized displacement over all masses in the case with the largest nominal violation. (c) Median closed-loop cost over the $48$ cases relative to FOM-MPC; nominal ROM-MPC is excluded because it violates the constraint. (d) Solve times over $495$ warm updates per controller from five cases run serially in balanced order on one thread. Boxes show quartiles, lines medians, whiskers the most extreme samples within $1.5$ interquartile ranges, and circles outliers.}
\label{fig:chain}
\end{figure*}

\section{Experiments}
\label{sec:experiments}

We evaluate CoRE-MPC on three systems of increasing dimension:
\begin{itemize}
\item the oscillator chain (Section~\ref{sec:chain}) quantifies the trade-off between full-order safety, closed-loop cost, and computation on a reduction where the invariant-manifold assumption of RN-ROMPC~\cite{alora2023rompc} holds (Remark~\ref{rem:M}).
\item the Burgers equation (Section~\ref{sec:burgers}) tests the certificate on a POD reduction whose manifold is not invariant, with closed-form bounds that hold for every state and radius.
\item the finite-element soft robot (Section~\ref{sec:soft_robot}) tests whole-body constraint satisfaction, on all $1{,}628$ nodes of a $9{,}768$-state robot, with a two-state ROM and bounds fitted from data.
\end{itemize}
We compare the following controllers:
\begin{itemize}
\item \emph{CoRE-MPC}: the proposed scheme~\eqref{eq:core_mpc_ocp}.
\item \emph{Constant CoRE}: CoRE-MPC with $M$ and $b$ replaced by constants, their domain maxima on the oscillator chain and separately fitted constants on the soft robot.
\item \emph{Nominal ROM-MPC}: the same ROM, horizon, and cost function without accounting for model mismatch.
\item \emph{FOM-MPC}~\cite{kohler2021robust}: robust MPC on the FOM with a single tube propagated by a sampled logarithmic-norm bound and re-centered on the measured full state at every update. As it uses the full model and full state, it serves as the benchmark for closed-loop cost and computation time.
\item \emph{RN-ROMPC}~\cite{alora2023rompc}: the special case of~\eqref{eq:rho_dynamics} with constant $M$ and $\gamma_{\perp\parallel}=0$ by invariance of the manifold (Remark~\ref{rem:M}), and an input-dependent forcing $b(u)=b_0+b_u\|u\|$ with constant gains, which absorbs the input-induced departure from the manifold. Its bounds are sampled from the model on the oscillator chain and fitted from rollouts on the soft robot.
\item \emph{Linear ROMPC}~\cite{lorenzetti2022linear,tonkens2021soft}: MPC on the linearized ROM with the nonlinearity bounded as a disturbance.
\end{itemize}
Within each example, all controllers share the task, horizon, and cost function, and use a weighted Euclidean norm (footnote~\ref{fn:norm}). FOM-MPC measures the full state, whereas the reduced-order controllers measure only $Px$. All controllers use a quadratic terminal cost. On the oscillator chain, CoRE-MPC also imposes the terminal set~\eqref{eq:core_mpc_terminal} (Section~\ref{sec:chain}), so Theorem~\ref{thm:core_mpc} applies. The baselines and the remaining examples omit it for ease of implementation, which relaxes recursive feasibility and so does not disadvantage the baselines. Constraint satisfaction on each sampling interval still follows from Theorem~\ref{thm:core_certificate} whenever the problem is feasible and the bounds hold. Solve times are measured using the SQP solver in acados~\cite{verschueren2022acados} with HPIPM on an Apple M3 Max. All controllers shown fully converge at every control update, so no fallback to the shifted previous plan of the proof of Theorem~\ref{thm:core_mpc} was needed. Implementation details and certificate parameters are documented with the accompanying code.

\begin{figure*}[t]
\centering
\includegraphics[width=\textwidth]{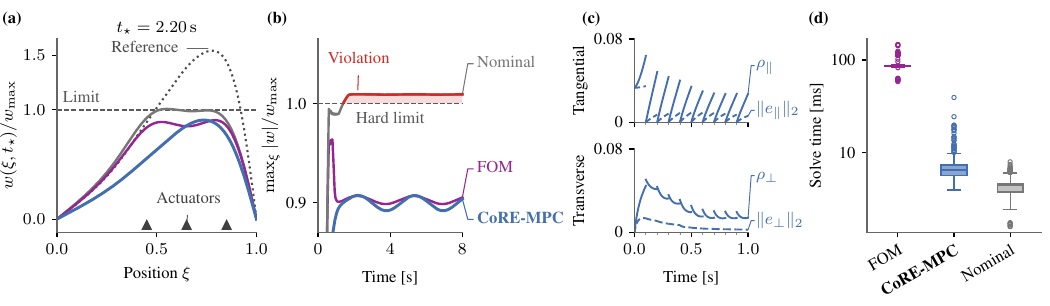}
\caption{Burgers equation. Nominal ROM-MPC (gray), FOM-MPC (purple), and CoRE-MPC (blue). (a) Field at $t^*=2.2$~s with the reference (dotted), the limit (dashed), and the actuator locations (triangles). (b) Largest normalized field value over all nodes. (c) CoRE-MPC radii (solid) and realized errors (dashed) over the first second; the tangential radius, chosen by the optimizer, restarts at zero after the first update, whereas the transverse radius is carried over. (d) Solve times over $316$ updates per controller from four cases, with box conventions as in Fig.~\ref{fig:chain}.}
\label{fig:burgers}
\end{figure*}

\subsection{Oscillator Chain}
\label{sec:chain}

\emph{Setup:} We consider the oscillator chain from~\cite{jain2022invariant} with $40$ masses connected by nonlinear springs and linear dampers, resulting in $N=80$ states (Fig.~\ref{fig:chain}(a)). A single actuator with the spatial shape of the first linear mode drives the chain, and a bounded force disturbance acts on the last mass. Every displacement is constrained, $|q_i|\leq q_{\max}$, and the reference requests a first-mode peak one third above this limit, so the constraint is active. The ROM has $r=2$ states on a spectral submanifold~\cite{haller2016ssm}, which is invariant for the autonomous FOM, so the assumption of RN-ROMPC holds. The horizon is $2$~s and the sampling time $100$~ms. The bounds of CoRE-MPC and RN-ROMPC are obtained from samples on the same domain. Those of CoRE-MPC include the covering margins of footnote~\ref{fn:cover}, so~\eqref{eq:bounds} holds on all of $\mathcal Z\times\mathcal U\times\mathcal E_{\bar\rho}$. CoRE-MPC further imposes the terminal set~\eqref{eq:core_mpc_terminal}. The augmented terminal region uses a continuous-time, two-radius adaptation of the construction in~\cite[Prop.~6]{kohler2021adaptive}. Its numerical definition and its admissibility and invariance certificate, \eqref{eq:terminal_constraints} and~\eqref{eq:terminal_invariance}, are provided with the accompanying code. We test $48$ initial conditions for $10$~s each, $16$ of which were generated to stress the constraint handling of the controllers.

\emph{Results:} Fig.~\ref{fig:chain} shows the case with the largest nominal violation and the statistics over all cases. Nominal ROM-MPC violates the displacement limit in $23$ of $48$ cases, by up to $4\%$ of $q_{\max}$ (Fig.~\ref{fig:chain}(b)). In contrast, all robust controllers satisfy the constraints in all cases, and the realized errors remain inside the predicted tubes throughout, in accordance with Theorem~\ref{thm:core_certificate}. The excess closed-loop cost over FOM-MPC (Fig.~\ref{fig:chain}(c)) increases with the conservatism of the bounds, from $35\%$ for CoRE-MPC to $49\%$ for Constant CoRE and $149\%$ for RN-ROMPC. Constant CoRE and RN-ROMPC replace the state- and input-dependent bounds by constants, so their tubes grow at the worst-case rate along the entire plan. Consequently, RN-ROMPC requires a large conservative backoff from the constraints, while CoRE-MPC can operate close to them (Fig.~\ref{fig:chain}(b)). Linear ROMPC is infeasible in all $48$ cases, since bounding the nonlinear spring force as a disturbance is too conservative. In computation (Fig.~\ref{fig:chain}(d)), CoRE-MPC solves $61$ times faster than FOM-MPC at $2.9$ times the solve time of nominal ROM-MPC, with a 95th percentile of $10.6$~ms against the $100$~ms sampling time. Constant CoRE and RN-ROMPC, which also propagate two radii but omit the terminal set and use simpler bounds, solve in about $80\%$ of the time of CoRE-MPC. Thus, even where the assumption of RN-ROMPC holds, CoRE-MPC incurs less than a quarter of its excess closed-loop cost at comparable computation.

\begin{figure*}[t]
\centering
\includegraphics[width=\textwidth]{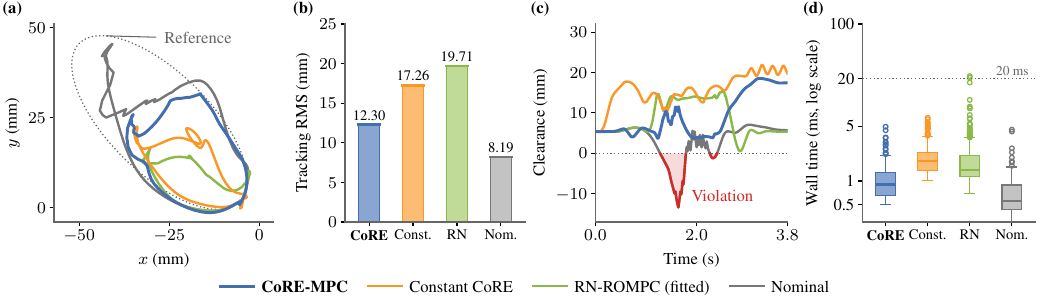}
\caption{Soft robot. CoRE-MPC (blue), Constant CoRE (orange), RN-ROMPC (green), and nominal ROM-MPC (gray). (a) Tip paths and the reference (dotted). (b) RMS tip tracking error over one complete reference traversal ($0.3$ to $3.3$~s). (c) Minimum clearance over all $1{,}628$ nodes; red marks wall penetration. (d) Controller wall times over $190$ updates, including the check of all nodes, with box conventions as in Fig.~\ref{fig:chain}; times exclude FEM integration.}
\label{fig:fem}
\end{figure*}

\subsection{Burgers Equation}
\label{sec:burgers}

\emph{Setup:} We consider the viscous Burgers equation on the unit interval with viscosity $\nu=0.06$, discretized on $128$ nodes and driven by three Gaussian actuators (Fig.~\ref{fig:burgers}(a)). A six-mode POD--Galerkin ROM~\cite{kunisch1999burgers} captures $99.998\%$ of the snapshot energy, and its approximation manifold is not invariant. The magnitude of the field is bounded by $w_{\max}$ at every node, while the reference peaks $54\%$ above this limit. The horizon is $0.3$~s and the sampling time $100$~ms. A bounded, spatially localized sinusoidal disturbance acts on the field, and its $24$ equally spaced phases define the $24$ cases, each run for $8$~s from rest. The FOM dynamics are quadratic in the state and the lift is linear, so the Jacobian blocks and residuals are polynomial in $(z,u,e)$ and the norm inequalities give closed-form upper bounds satisfying~\eqref{eq:bounds} on $\mathcal E_\rho$ for every $\rho\geq0$ instead of on a fixed $\mathcal E_{\bar\rho}$, affine in $\rho$ with nonnegative coefficients. The comparison system~\eqref{eq:rho_dynamics} becomes $\dot\rho=M(z,u,\rho)\rho+b(z,u,\rho)$, to which Theorem~\ref{thm:core_certificate} extends by Corollary~\ref{cor:radius}.

\emph{Results:} Fig.~\ref{fig:burgers} shows the case with the largest nominal violation and the solve times. Despite the small discarded snapshot energy, nominal ROM-MPC violates the limit in all $24$ cases, by up to $0.9\%$ of $w_{\max}$ and for most of the run (Fig.~\ref{fig:burgers}(b)). CoRE-MPC and FOM-MPC complete all cases without violation and converge to a similar steady state at about $0.90\,w_{\max}$. Along the reference, both couplings and the transverse residual are nonzero, so the coupled structure of~\eqref{eq:rho_dynamics} is required, and $M$ is not Hurwitz. CoRE-MPC nevertheless remains feasible in every case: with the free initialization~\eqref{eq:core_mpc_initial}, the optimizer selects $\rho_\parallel(0)=0$ at every update after the first, so the tangential radius restarts, and the carried transverse radius stays bounded (the gain from $\sigma_k$ to $\sigma_{k+1}$ is at most $0.73$ over all intervals, cf. Remark~\ref{rem:M}). The certificate is valid and tight: every tube contains the realized errors, which reach $63\%$ of the transverse and $98\%$ of the tangential radius within the applied intervals over all cases (Fig.~\ref{fig:burgers}(c)), and along the evaluated trajectories the coupling bounds exceed the exact Jacobian block norms by factors of at most $1.15$ and $1.38$. CoRE-MPC incurs $27\%$ higher closed-loop cost than FOM-MPC and solves $13$ times faster, at $1.5$ times the solve time of nominal ROM-MPC, with a 95th percentile of $12.3$~ms against the $100$~ms sampling time (Fig.~\ref{fig:burgers}(d)).

\subsection{Finite-Element Soft Robot}
\label{sec:soft_robot}

\emph{Setup:} We consider the cable-driven diamond robot of~\cite{goury2018mor}, simulated in SOFA~\cite{faure2012sofa} with $9{,}768$ states, $1{,}628$ nodes, and four cable forces between $0$ and $3$~N. The task is to track an elliptical tip reference that extends beyond a wall (Fig.~\ref{fig:fem}(a)), so it cannot be tracked exactly on its upper arc, while every node must stay clear of the wall. The ROM has $r=2$ states on a spectral submanifold of the FEM~\cite{haller2016ssm}, whose lifting reconstructs all nodes. The horizon is $60$~ms and the sampling time $20$~ms. Each controller runs the task for $3.8$~s without disturbance. Since the FOM is too large to sample the block Jacobian, an input-dependent $M$ and a state- and input-dependent $b$ are fitted to the prediction errors of ten earlier rollouts on the same robot and wall with nearby elliptical references (Section~\ref{sec:core_cert}), and the bounds of RN-ROMPC and Constant CoRE are fitted to the same data, so all three are empirical. The weighting of the error norm is optimized on the residuals of the calibration rollouts to minimize the wall tightening over all nodes (footnote~\ref{fn:norm}). Only the nodes with the largest predicted wall envelope enter the optimization, and every plan is checked against all $1{,}628$ nodes, which CoRE-MPC never violates.

\emph{Results:} The clearance in Fig.~\ref{fig:fem}(c) summarizes the outcome: nominal ROM-MPC penetrates the wall by $13$~mm, whereas the three robust controllers keep every node clear, with minimum clearances of $3.7$~mm for CoRE-MPC, $5.4$~mm for Constant CoRE, and $0.5$~mm for RN-ROMPC. RN-ROMPC~\cite{alora2023rompc}, with its constant upper-triangular $M$, backs away from the wall along the upper arc yet comes closest to it. Averaged over matched prediction stages with identical initial radii, the wall tightening of RN-ROMPC is $2.3$ times that of CoRE-MPC ($15.3$ against $6.6$~mm). CoRE-MPC, whose $M$ and $b$ follow the plan, tracks with $29\%$ less error than Constant CoRE and $38\%$ less than RN-ROMPC (Fig.~\ref{fig:fem}(b)), and the lower error of nominal ROM-MPC is obtained on a trajectory that penetrates the wall. In computation (Fig.~\ref{fig:fem}(d)), all four controllers solve in under $2$~ms at the median, CoRE-MPC in $0.9$~ms against $0.55$~ms for nominal ROM-MPC and never above $5$~ms, whereas two RN-ROMPC updates exceed the $20$~ms period. Full-order MPC is excluded, since one FEM horizon prediction alone takes $0.56$~s, or $28$ sampling periods.

\emph{Validation:} To test the fitted bounds beyond the calibration references, we fit $M$ and $b$ of the three robust controllers above to the prediction errors of $202$ rollouts on $24$ elliptical references with a $10\%$ margin over the recorded errors. We then apply CoRE-MPC and RN-ROMPC on five held-out elliptical references, one of which is repeated with an initial transverse error of $0.5$~mm and again, as a stress test, with a $50$~mN force toward the wall exceeding $\bar d$. In all $14$ held-out runs, every node stays clear of the wall and both radii contain the realized errors at every update, reaching at most $90\%$ of the tangential and $77\%$ of the transverse radius. On the task of Fig.~\ref{fig:fem}, the larger calibration set increases the RMS tracking error of all controllers. CoRE-MPC increases to $17.4$~mm, which is still $20\%$ less than Constant CoRE ($21.7$~mm) and $50\%$ less than RN-ROMPC ($34.7$~mm).

\section{Conclusion}
\label{sec:conclusion}

We have presented CoRE-MPC, a reduced-order MPC that enforces full-order constraints for a general class of nonlinear ROMs, including non-invariant approximation manifolds and ROMs learned from data. The error within the reduced coordinates can be reset at every update, but the error outside them cannot, so CoRE-MPC bounds the two separately and propagates both radii with the ROM at the cost of two extra states. In the numerical studies, this yields a fraction of the excess cost of existing certified schemes where they apply, and real-time whole-body collision avoidance for a soft robot where full-order robust MPC is prohibitive. Open issues include the certification of comparably tight bounds at the finite-element scale, where the bounds are currently fitted to data, and additional sensing of the transverse error, which is propagated open-loop.

\section*{Acknowledgment}
GPT-6 Astra (OpenAI) was used to assist in writing the scripts for Figs.~\ref{fig:hero}--\ref{fig:fem}, and Claude Fable 5.1 (Anthropic) was used for editing and grammar. The authors independently verified every derivation, implementation, and result.

\bibliographystyle{IEEEtran}
\bibliography{references}

@book{holmes1996turbulence,
  author    = {Holmes, Philip and Lumley, John L. and Berkooz, Gal},
  title     = {Turbulence, Coherent Structures, Dynamical Systems and Symmetry},
  publisher = {Cambridge University Press},
  address   = {Cambridge, UK},
  year      = {1996}
}

@article{benner2015survey,
  author  = {Benner, Peter and Gugercin, Serkan and Willcox, Karen},
  title   = {A survey of projection-based model reduction methods for parametric dynamical systems},
  journal = {SIAM Review},
  volume  = {57},
  number  = {4},
  pages   = {483--531},
  year    = {2015},
  doi     = {10.1137/130932715}
}

@article{barnett2022quadratic,
  author  = {Barnett, Joshua and Farhat, Charbel},
  title   = {Quadratic approximation manifold for mitigating the {K}olmogorov barrier in nonlinear projection-based model order reduction},
  journal = {Journal of Computational Physics},
  volume  = {464},
  year    = {2022},
  note    = {{Art.} no. 111348},
  doi     = {10.1016/j.jcp.2022.111348}
}

@inproceedings{alora2023rompc,
  author    = {Alora, John Irvin and Pabon, Luis A. and K{\"o}hler, Johannes and Cenedese, Mattia and Schmerling, Ed and Zeilinger, Melanie N. and Haller, George and Pavone, Marco},
  title     = {Robust nonlinear reduced-order model predictive control},
  booktitle = {2023 62nd IEEE Conference on Decision and Control (CDC)},
  pages     = {4798--4805},
  year      = {2023},
  doi       = {10.1109/CDC49753.2023.10383243}
}

@article{haller2016ssm,
  author  = {Haller, George and Ponsioen, Sten},
  title   = {Nonlinear normal modes and spectral submanifolds: existence, uniqueness and use in model reduction},
  journal = {Nonlinear Dynamics},
  volume  = {86},
  number  = {3},
  pages   = {1493--1534},
  year    = {2016},
  doi     = {10.1007/s11071-016-2974-z}
}

@book{lakshmikantham1969differential,
  author    = {Lakshmikantham, Vangipuram and Leela, Srinivasa},
  title     = {Differential and Integral Inequalities: Theory and Applications, Vol.~{I}: Ordinary Differential Equations},
  series    = {Mathematics in Science and Engineering},
  volume    = {55},
  publisher = {Academic Press},
  address   = {New York},
  year      = {1969}
}

@article{soderlind2006lognorm,
  author  = {S{\"o}derlind, Gustaf},
  title   = {The logarithmic norm. {H}istory and modern theory},
  journal = {BIT Numerical Mathematics},
  volume  = {46},
  number  = {3},
  pages   = {631--652},
  year    = {2006},
  doi     = {10.1007/s10543-006-0069-9}
}

@inproceedings{aminzare2014contraction,
  author    = {Aminzare, Zahra and Sontag, Eduardo D.},
  title     = {Contraction methods for nonlinear systems: A brief introduction and some open problems},
  booktitle = {2014 53rd IEEE Conference on Decision and Control (CDC)},
  pages     = {3835--3847},
  year      = {2014},
  doi       = {10.1109/CDC.2014.7039986}
}

@article{bobiti2018sampling,
  author  = {Bobiti, Ruxandra and Lazar, Mircea},
  title   = {Automated-sampling-based stability verification and {DOA} estimation for nonlinear systems},
  journal = {IEEE Transactions on Automatic Control},
  volume  = {63},
  number  = {11},
  pages   = {3659--3674},
  year    = {2018},
  doi     = {10.1109/TAC.2018.2797196}
}

@article{chen1998quasi,
  author  = {Chen, Hong and Allg{\"o}wer, Frank},
  title   = {A quasi-infinite horizon nonlinear model predictive control scheme with guaranteed stability},
  journal = {Automatica},
  volume  = {34},
  number  = {10},
  pages   = {1205--1217},
  year    = {1998},
  doi     = {10.1016/S0005-1098(98)00073-9}
}

@article{kohler2021robust,
  author  = {K{\"o}hler, Johannes and Soloperto, Raffaele and M{\"u}ller, Matthias A. and Allg{\"o}wer, Frank},
  title   = {A computationally efficient robust model predictive control framework for uncertain nonlinear systems},
  journal = {IEEE Transactions on Automatic Control},
  volume  = {66},
  number  = {2},
  pages   = {794--801},
  year    = {2021},
  doi     = {10.1109/TAC.2020.2982585}
}

@article{kohler2021adaptive,
  author  = {K{\"o}hler, Johannes and K{\"o}tting, Peter and Soloperto, Raffaele and Allg{\"o}wer, Frank and M{\"u}ller, Matthias A.},
  title   = {A robust adaptive model predictive control framework for nonlinear uncertain systems},
  journal = {International Journal of Robust and Nonlinear Control},
  volume  = {31},
  number  = {18},
  pages   = {8725--8749},
  year    = {2021},
  doi     = {10.1002/rnc.5147}
}

@article{kohler2026filtered,
  author  = {K{\"o}hler, Johannes and Scholz, Carlo and Zeilinger, Melanie N.},
  title   = {Robust reduced-order model predictive control using peak-to-peak analysis of filtered signals},
  journal = {European Journal of Control},
  year    = {2026},
  note    = {{Art.} no. 101569},
  doi     = {10.1016/j.ejcon.2026.101569}
}

@article{verschueren2022acados,
  title     = {acados---a modular open-source framework for fast embedded optimal control},
  author    = {Verschueren, Robin and Frison, Gianluca and Kouzoupis, Dimitris and Frey, Jonathan and van Duijkeren, Niels and Zanelli, Andrea and Novoselnik, Branimir and Albin, Thivaharan and Quirynen, Rien and Diehl, Moritz},
  journal   = {Mathematical Programming Computation},
  volume    = {14},
  number    = {1},
  pages     = {147--183},
  year      = {2022}
}

@article{lorenzetti2022linear,
  title   = {Linear reduced-order model predictive control},
  author  = {Lorenzetti, Joseph and McClellan, Andrew and Farhat, Charbel and Pavone, Marco},
  journal = {IEEE Transactions on Automatic Control},
  volume  = {67},
  number  = {11},
  pages   = {5980--5995},
  year    = {2022},
  doi     = {10.1109/TAC.2022.3179539}
}

@incollection{faure2012sofa,
  author    = {Faure, Fran{\c{c}}ois and Duriez, Christian and Delingette, Herv{\'e} and Allard, J{\'e}r{\'e}mie and Gilles, Benjamin and Marchesseau, St{\'e}phanie and Talbot, Hugo and Courtecuisse, Hadrien and Bousquet, Guillaume and Peterlik, Igor and Cotin, St{\'e}phane},
  title     = {{SOFA}: A multi-model framework for interactive physical simulation},
  booktitle = {Soft Tissue Biomechanical Modeling for Computer Assisted Surgery},
  series    = {Studies in Mechanobiology, Tissue Engineering and Biomaterials},
  publisher = {Springer},
  address   = {Berlin, Heidelberg},
  pages     = {283--321},
  year      = {2012},
  doi       = {10.1007/8415_2012_125}
}

@article{kunisch1999burgers,
  author  = {Kunisch, K. and Volkwein, S.},
  title   = {Control of the {B}urgers equation by a reduced-order approach using proper orthogonal decomposition},
  journal = {Journal of Optimization Theory and Applications},
  volume  = {102},
  number  = {2},
  pages   = {345--371},
  year    = {1999},
  doi     = {10.1023/A:1021732508059}
}

@article{goury2018mor,
  author  = {Goury, Olivier and Duriez, Christian},
  title   = {Fast, generic, and reliable control and simulation of soft robots using model order reduction},
  journal = {IEEE Transactions on Robotics},
  volume  = {34},
  number  = {6},
  pages   = {1565--1576},
  year    = {2018},
  doi     = {10.1109/TRO.2018.2861900}
}

@article{dellasantina2023survey,
  author  = {Della Santina, Cosimo and Duriez, Christian and Rus, Daniela},
  title   = {Model-based control of soft robots: A survey of the state of the art and open challenges},
  journal = {IEEE Control Systems Magazine},
  volume  = {43},
  number  = {3},
  pages   = {30--65},
  year    = {2023},
  doi     = {10.1109/MCS.2023.3253419}
}

@article{lohning2014rompc,
  author  = {L{\"o}hning, Martin and Reble, Marcus and Hasenauer, Jan and Yu, Shuyou and Allg{\"o}wer, Frank},
  title   = {Model predictive control using reduced order models: Guaranteed stability for constrained linear systems},
  journal = {Journal of Process Control},
  volume  = {24},
  number  = {11},
  pages   = {1647--1659},
  year    = {2014},
  doi     = {10.1016/j.jprocont.2014.07.006}
}

@inproceedings{sopasakis2013rompc,
  author    = {Sopasakis, Pantelis and Bernardini, Daniele and Bemporad, Alberto},
  title     = {Constrained model predictive control based on reduced-order models},
  booktitle = {2013 52nd IEEE Conference on Decision and Control (CDC)},
  pages     = {7071--7076},
  year      = {2013}
}

@article{wang2022tube,
  author  = {Wang, Wei and Koeln, Justin P.},
  title   = {Tube-based robust {MPC} for two-timescale systems using reduced-order models},
  journal = {IEEE Control Systems Letters},
  volume  = {7},
  pages   = {799--804},
  year    = {2023},
  doi     = {10.1109/LCSYS.2022.3226893}
}

@inproceedings{tonkens2021soft,
  author    = {Tonkens, Sander and Lorenzetti, Joseph and Pavone, Marco},
  title     = {Soft robot optimal control via reduced order finite element models},
  booktitle = {2021 IEEE International Conference on Robotics and Automation (ICRA)},
  pages     = {12010--12016},
  year      = {2021}
}

@inproceedings{ghansah2025hierarchical,
  author    = {Ghansah, Adrian B. and Esteban, Sergio A. and Ames, Aaron D.},
  title     = {Hierarchical reduced-order model predictive control for robust locomotion on humanoid robots},
  booktitle = {2025 IEEE-RAS 24th International Conference on Humanoid Robots (Humanoids)},
  pages     = {1--8},
  year      = {2025}
}

@article{jain2022invariant,
  author  = {Jain, Shobhit and Haller, George},
  title   = {How to compute invariant manifolds and their reduced dynamics in high-dimensional finite element models},
  journal = {Nonlinear Dynamics},
  volume  = {107},
  number  = {2},
  pages   = {1417--1450},
  year    = {2022},
  doi     = {10.1007/s11071-021-06957-4}
}
\end{document}